\newcommand{\xm}{\textbf{X}}
\newcommand{\um}{\textbf{U}}
\newcommand{\wm}{\textbf{W}}
\newcommand{\km}{\textbf{K}}
\newcommand{\vm}{\textbf{V}}
\newcommand{\sv}{\textbf{s}}
\newcommand{\xv}{\textbf{x}}
\newcommand{\yv}{\textbf{y}}
\newcommand{\vv}{\textbf{v}}
\newcommand{\hv}{\textbf{h}}
\newcommand{\svec}{\textbf{s}}
\newcommand{\nv}{\textbf{n}}
\newcommand{\av}{\textbf{a}}
\newcommand{\qv}{\textbf{q}}
\title{\textbf{A Survey of Neural Networks and Formal Languages}
\footnote{This work was partially supported by DARPA Safedocs Program award HR001119C0075 for which SRI is the prime contractor and Dartmouth is a subcontractor.}}
\author{Joshua Ackerman\thanks{\noindent Department of Computer Science, Dartmouth College, Hanover NH 03755. joshua.m.ackerman.gr{@}dartmouth.edu} \and 
George Cybenko {\footnote{ Thayer School of Engineering, Dartmouth College, Hanover NH 03755. gvc{@}dartmouth.edu }}}
\date{\today}
\newtheorem{definition}{Definition}
\newtheorem{theorem}{Theorem}
\begin{document}

\maketitle

\begin{abstract}
    This report is a survey of the relationships between
    various state-of-the-art neural network architectures and formal languages as, for example, structured by the Chomsky Language Hierarchy.  Of particular interest are the abilities of a neural architecture to represent, recognize and generate words from a specific language by learning from samples of the language.
\end{abstract}

\section{Introduction}
Understanding how well different neural network architectures decide membership in classes of formal languages is a fundamental problem spanning machine learning and language processing. In principle, it is known that even the simplest variants of Recurrent Neural Networks are capable of emulating a Turing Machine in real-time, and consequently are Turing Complete \cite{sieg}. However, this result and similar ones, rely on unrealistic assumptions such as unbounded computation time and infinite precision representation of real-valued states. 

For these reasons, our understanding of the relationship between different network architectures and the Chomsky hierarchy \cite{jager2012formal} is not complete, and as a consequence there is growing interest in understanding how different networks operate under these more realistic constraints. In particular, we consider computation time linear in the input size and bounded precision numbers as constraints. Adopting the terminology of \cite{weiss}, we will describe such networks as input-bounded neural networks with finite-precision states (IBFP-NNs). 

Research in this general area dates back to the early 1990s, with works such as \cite{steijvers1996recurrent} \cite{tonkes} \cite{holldobler} \cite{rodriguez} empirically studying the ability of different networks to learn variations of context-free counter languages \cite{fischer1968counter}. Notably, it was around this time when researchers started exploring the idea of augmenting networks with memory constructs such as in \cite{das}, which introduced the Recurrent Neural Network Pushdown Automaton (NNPDA) -- an RNN augmented with an external stack. Very recently, the body of research on such Memory Augmented Neural Networks has grown considerably, with the introduction of fully differentiable memory models such as Neural Stacks \cite{gref}, Neural Queues \cite{gref}, and even Neural Turing Machines \cite{graves}. Many of these papers present empirical results, so naturally, it is not guaranteed their findings hold in general. 

However, in the last year, a number of papers have appeared that make very direct comparisons of modern network architectures with respect to formal language processing. Papers such as \cite{mer} and \cite{weiss} explore the theoretical power of IBFP variants of many such state-of-the-art networks, while papers like \cite{suz} explore how networks can be augmented to better perform formal language focused tasks.

For the SafeDocs project, we are primarily focused on two key tasks: (i) generating novel samples from unknown grammars given a small sample set of positive examples; (ii) empirically deciding whether new samples are in a language thus learned. Although the aforementioned papers primarily discuss results in the spirit of task (ii), it is worth noting that generative modeling (i), is a comparably harder task than that of discriminative modeling (ii), so many of these results are still applicable (albeit less precisely) with respect to quantifying the limits of networks towards generative tasks. 

We believe our empirical exploration as part of task (ii) will motivate richer theoretical results surrounding the generative capacity of certain models in the setting of Generative Adversarial Neural Networks \cite{ian}. For the moment, however, the most relevant results to our work are those of papers like \cite{weiss} \cite{mer} \cite{suz} which study the power or design of state-of-the-art networks in the context of language recognition.  In this project oriented survey, we will walk through some of the most pertinent results from these papers, surveying theoretical and empirical results which quantify the ability of Convolutional Neural Networks (CNNs),  Recurrent Neural Networks (RNNs), Transformer Networks, along with variations on Memory Augmented Neural Networks (MANNs) to learn languages in the Chomsky Hierarchy under the practical assumptions.

This report is structured as follows:  Section \ref{sec-def} reviews several concepts related to network performance and language theory. Section \ref{sec-results} reviews known results regarding several network architecture types with respect to language representation and recognition problems.  Section \ref{sec-summary} is a succinct summary of the results we have reviewed in this report.

\section{Definitions} \label{sec-def}
In this section we recall the core terminology introduced in \cite{mer} which will be necessary to properly contextualize the concept of asymptotic analysis of neural networks. To start, given an alphabet $\Sigma$ of size $\ell$, we encode an $n$-length sequence as an $n \times \ell$ matrix $\xm$ where the $i$th row of $\xm$, denoted $\xv_i$, is one-hot encoding of the $i$th sequence character. With this basic notion in mind, we can define the fundamental concept of a \textit{neural sequence acceptor} \cite{mer}.

\begin{definition}[Neural Sequence Acceptor] A neural sequence acceptor $\hat{\mathds{1}}$ is a family of functions, parameterized by $\Theta$, of the form
\[
    (\hat{\mathds{1}}^\Theta : \xm \mapsto p)_{\xm \in {\{0,1\}^{n \times \ell}}}
\]
where $p \in [0,1]$.
\end{definition}
\noindent Intuitively, neural sequence acceptors are neural networks with parameters $\Theta$ which take sequences as input, and returns the probability that the input is part of some language. 
\\ \\ All presented results from  \cite{mer}, are quantified by the notion of \textit{asymptotic acceptance}. Very simply, in this setting we allow the magnitude of the parameters $\Theta$ of a neural node to get arbitrarily large. Of course, this alone is not intrinsically a practical assumption however, unintuitively, it actually leads to a more pessimistic view on the capacity of the computational power of IBFP-NNs. In practice, neural networks have been observed to have the ability to learn non-asymptotic strategies in certain problems \cite{mer}. However, small noisy perturbations of the activation functions of those networks during training lead to failure on the tested problems \cite{mer}, which suggests that on ``noisier'' datasets (i.e., PDFs), asymptotic results are in fact more likely to be realized, and therefore are not an unreasonably pessimistic or irrational assumption in practice.

\begin{definition}[Asymptotic Acceptance] Let $L$ be a language with indicator function $\mathds{1}_L$, then a neural sequence acceptor $\hat{\mathds{1}}^\Theta$ is said to asymptotically accept $L$ if
    \[
        \lim_{N \rightarrow \infty} \hat{\mathds{1}}^{N\Theta} = \mathds{1}_L.
    \]
\end{definition}
\noindent Beyond the notion of acceptance, another complexity metric for a IBFP-NN arising in this asymptotic context is that of \textit{general state complexity}. General state complexity captures the full amount of memory a network can employ at each stage of a computation. Therefore, understanding the general state complexity of a network yields insight to its expressive capacity. A networks general state complexity is measured across its \textit{hidden states}, or the intermediate representations a network employs to arrive at a final answer.
\begin{definition}[Hidden State]  Let $\vv \in \mathds{R}^k$. The $k$-length hidden state, with respect to parameters $\theta$ is a family of functions,
    \[
        (\hv^\theta_t : \xm \mapsto \vv_t)_{|\xm| = n}
    \]
defined for every $t \in [n]$.
\end{definition}
\noindent For a given hidden state with fixed parameters, the configuration set is the set of values the hidden state takes, varied over possible different input sentences, or more simply, the number of configurations a hiddens state can exist in.
\begin{definition}[Configuration Set] For all $n$, the configuration set of a hidden state $\hv_n$ with parameters $\theta$, is defined as
    \[
        M(\hv_n^\theta) = \left\{ \lim_{N \rightarrow \infty} \hv^{N\theta}_n (\xm) \mid \xv_i \in \texttt{onehot}(\Sigma) \right\}
    \]
\end{definition}
\noindent Finally, the quantity of authentic interest for a given network is the general state complexity, which considers the largest number of values a network hidden state can take across all possible model parameters.
\begin{definition}[General State Complexity] The general state complexity of a hidden state $\hv_n$ with parameters $\theta$ is defined as the maximum fixed state complexity, or,
    \[
        \textarc{m}(\hv_n) = \max_\theta \left|M(\hv_n^\theta)\right|.
    \]
\end{definition}

\subsection{Language Theoretic}
We use $\mathcal{L}(M)$ to denote the language accepted by some machine $M$, and language classes such as the set of all regular languages, \textbf{REGULAR}, will be set in capitalized, bold text. We will assume knowledge of basic formal language theory, however we will define some possibly less common classes of languages. The first such language is a \textit{strictly $k$-local language}. Fittingly, these are languages with very structured, local behavior of size $k$.
\begin{definition}[Strictly $k$-local language] Let $\Sigma$ be an alphabet, which without loss of generality does not contain the character \#. A \textit{strictly $k$-local language} is a set of constraints $S$ of the form,
    \[
        S = \left\{s \mid s \in \big(\Sigma \cup \{\#\}\big)^k \right\}.
    \]
\end{definition}
\noindent Next, we introduce \textit{Dyck Languages}, which informally consist of correctly nested sequences of parentheses. 
\begin{definition}[Dyck Languages] Given a bipartite set of characters $(P, \overline{P})$, the Dyck language, $\mathcal{D}_P$, is defined by the set,
    \[
       \mathcal{D}_P = \left\{x \in (P \cup \overline{P})^* \mid x \text{ is a well balanced set of parenthesis} \right \}.
    \]
\end{definition}
\noindent In contexts where we are only concerned with the number of parenthesis, we will write $\mathcal{D}_n$ as short hand for $\mathcal{D}_{[2n]}$. 
\noindent Although seemingly very simple, the Chomsky–Sch\"{u}tzenberger representation theorem suggests these languages are in some sense related to context free languages. This particular relationship, relies on the idea of a \textit{homomorphism}, which for two alphabets $\Sigma, \Delta$, is a map $h:\Sigma^* \mapsto \Delta^*$ such that $h(\epsilon) = \epsilon$, and for any $x,y \in \Sigma$, $h(xy) = h(x)h(y)$. 

\begin{theorem}[Chomsky–Sch\"{u}tzenberger Representation Theorem] A language $L$ over $\Sigma$ is context-free if and only if there is a bipartite set of characters $(P, \overline{P})$, a regular language $R$ over $(P, \overline{P})$, and a homomorphism $h :(P, \overline{P})^* \mapsto \Sigma^*$ such that $ L = h(\mathcal{D}_{P} \cap R)$.
\end{theorem}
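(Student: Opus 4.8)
The plan is to prove the two implications separately, beginning with the easier ``if'' direction. Suppose $L = h(\mathcal{D}_P \cap R)$ for some bipartite set $(P, \overline{P})$, regular $R$, and homomorphism $h$. I would invoke three standard facts. First, every Dyck language $\mathcal{D}_P$ is context-free, since it is generated by the grammar with productions $S \to \epsilon$, $S \to SS$, and $S \to p\,S\,\bar{p}$ for each matched pair $(p,\bar{p})$. Second, the context-free languages are closed under intersection with regular languages, via the standard product construction pairing a pushdown automaton for $\mathcal{D}_P$ with a finite automaton for $R$; hence $\mathcal{D}_P \cap R$ is context-free. Third, context-free languages are closed under homomorphic image, so $h(\mathcal{D}_P \cap R) = L$ is context-free. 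This direction is thus essentially bookkeeping over known closure properties.

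For the ``only if'' direction, suppose $L$ is context-free and fix a grammar $G = (V, \Sigma, \Pi, S)$ with $\mathcal{L}(G) = L$, which I would place in Chomsky normal form so that every production in $\Pi$ has the form $A \to BC$ or $A \to a$. The guiding idea is that a balanced string of parentheses can encode the \emph{shape} of a derivation tree, while a regular filter enforces that the encoding actually respects the productions of $G$. Concretely, I would index the bipartite set $(P, \overline{P})$ by the productions of $G$, introducing one matched pair $(p_\pi, \bar{p}_\pi)$ for each $\pi \in \Pi$, and associate to each (leftmost) derivation the bracket string obtained by emitting $p_\pi$ when the node labeled by production $\pi$ is entered and $\bar{p}_\pi$ when it is exited. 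Correct nesting of this encoding is precisely membership in $\mathcal{D}_P$.

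I would then take $R$ to be the set of bracket strings whose consecutive brackets are locally compatible with $G$: inside the pair for a production $A \to BC$ the first child bracket must open a production whose left-hand side is $B$, the next must open one with left-hand side $C$, and a terminal production $A \to a$ must have no interior brackets. Because these are constraints on adjacent symbols only, they form a strictly local condition in the sense defined above and hence cut out a regular language, which is exactly why $R$ may be taken regular. Finally I would let $h$ send each opening bracket $p_\pi$ of a terminal production $\pi : A \to a$ to the symbol $a$ and send every other bracket symbol to $\epsilon$, so that $h$ reads off the yield of the encoded tree.

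The crux, and the step I expect to be the main obstacle, is the correctness identity $h(\mathcal{D}_P \cap R) = L$ with equality proved in both directions. The inclusion $L \subseteq h(\mathcal{D}_P \cap R)$ follows by a straightforward induction on tree structure: every parse tree of a word $w \in L$ maps to some bracket string in $\mathcal{D}_P \cap R$ whose $h$-image is $w$. The reverse inclusion is the delicate part, since one must argue that the Dyck matching together with the local constraints of $R$ admits no ``spurious'' bracket strings failing to arise from an actual derivation of $G$. This requires showing that the purely local conditions are simultaneously strong enough to force a globally consistent tree and weak enough to remain regular; reconciling these two demands, and checking that $h$ recovers exactly the yield without collapsing distinct derivations onto words outside $L$, is where the genuine work of the proof concentrates.
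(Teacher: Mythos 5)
The paper itself offers no proof of this statement: it is quoted as the classical Chomsky--Sch\"{u}tzenberger theorem, purely to motivate the role of Dyck languages, so your argument must be judged on its own merits rather than against a proof in the paper. Your ``if'' direction is correct and complete modulo standard facts: $\mathcal{D}_P$ is context-free, and context-free languages are closed under intersection with regular languages and under homomorphic images, so $h(\mathcal{D}_P \cap R)$ is context-free.

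The ``only if'' direction has a genuine gap, located exactly where you predicted the work would concentrate, and your construction as stated does not survive it. With one bracket pair per production and $R$ defined by constraints on adjacent symbols, the inclusion $h(\mathcal{D}_P \cap R) \subseteq L$ fails. Concretely, take the Chomsky-normal-form grammar with productions $\pi_1 : S \to AA$ and $\pi_2 : A \to a$, so that $L = \{aa\}$. Every adjacent pair occurring in the string
\[
w = p_{\pi_1}\, (p_{\pi_2} \bar{p}_{\pi_2})\, (p_{\pi_2} \bar{p}_{\pi_2})\, (p_{\pi_2} \bar{p}_{\pi_2})\, \bar{p}_{\pi_1}
\]
also occurs in the legitimate encoding $p_{\pi_1} (p_{\pi_2} \bar{p}_{\pi_2})(p_{\pi_2} \bar{p}_{\pi_2}) \bar{p}_{\pi_1}$ of the unique parse tree: $p_{\pi_1}p_{\pi_2}$ (first child has left-hand side $A$), $p_{\pi_2}\bar{p}_{\pi_2}$ (terminal production), $\bar{p}_{\pi_2}p_{\pi_2}$ (junction between sibling subtrees), and $\bar{p}_{\pi_2}\bar{p}_{\pi_1}$ (last child has left-hand side $A$). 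Hence $w$ is well nested and passes every local test your $R$ imposes, yet $h(w) = aaa \notin L$. The root cause: at a sibling junction $\bar{p}_{\pi'}p_{\pi''}$ the identity of the parent production is invisible --- it sits outside an arbitrarily long well-nested block --- and the Dyck condition only synchronizes each bracket with its own partner, so nothing forces a binary node to have exactly two children. This is not a presentational slip; it is precisely the difficulty the classical proof is engineered around. The standard remedy is to use \emph{two} bracket pairs per production, wrapping each child position separately: a node derived by $\pi : A \to BC$ is encoded as $a_\pi E(T_B) \bar{a}_\pi\, b_\pi E(T_C) \bar{b}_\pi$, and a node derived by $\pi : A \to c$ as $a_\pi \bar{a}_\pi$ with $h(a_\pi) = c$. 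With this alphabet, ``$\bar{a}_\pi$ is immediately followed by $b_\pi$'' is a legitimate adjacency constraint (both symbols carry the same index), openings never directly follow closings, and the Dyck matching of the wrapper pair itself transports the parent's identity to both child boundaries; your two-sided induction then goes through. So your overall architecture --- encode parse trees by brackets, filter by a local regular condition, read off the yield by a homomorphism --- is the right one, but the one-pair-per-production alphabet is too coarse for any strictly local $R$ to certify correctness.
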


\noindent Finally, we mention the informal definitions for \textit{simplified $k$-counter machines} found in  \cite{weiss}, and the more general notion of a \textit{counter machine} from \cite{mer}. First, Weis et al. \cite{weiss} loosely define a \textit{counter device} as something holding a value which can be incremented by a fixed amount, decremented by a fixed amount, or compared to zero (\texttt{COMP0}). They then define a \textit{simplified $k$-counter machine} (SKCM) as ``a finite-state automaton extended with $k$-counters, where at each step of the computation each counter can be incremented, decremented or ignored in an input-dependent way, and state-transitions and accept/reject decisions can inspect the counters' states using \texttt{COMP0}.'' Accordingly, counter machines are simply finite automata which have access to a finite number of counting devices \cite{mer}.

\section{Results} \label{sec-results}
\subsection{Convolutional Neural Networks}
Networks such as Recurrent Neural Networks, tend to be more colloquially associated with sequential data, however Convolutional Neural Networks (CNNs) do have compelling uses in sequential tasks \cite{yin} due to  strengths such as their ability to learn positionally invariant features. We are not aware of any results concerning the capacity of deep convolutional networks in the context of formal language tasks, however \cite{mer} studied a simple, convolutional networks with max pooling in the IBFP setting. The model of a CNN which they study is given by the following equations
    \begin{align}
        &\hv_t = \tanh \left( \textbf{W}^h \left(\xv_{t-k} || \cdots || \xv_{t+k} \right) + \textbf{b}^h \right) \\
        &\hv_\text{pool} = \text{maxpool}(\textbf{H}) \\
        &a = \sigma(\textbf{W}^a \hv_\text{pool} + \textbf{b}^a).
    \end{align}
\noindent Based on this model, \cite{mer} proved the following result.
\begin{theorem} Let \textbf{REGULAR} be the class of all regular languages, and  \textbf{STRICTLY-LOCAL} be the class of all languages acceptable by a strictly local grammar. The following inclusions hold asymptotically,,
   \[
    \textbf{STRICTLY-LOCAL} \subseteq \mathcal{L}(\textsf{IBFP-1-CNN}) \subset \textbf{REGULAR}.
   \]
\end{theorem}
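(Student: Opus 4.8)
The statement asserts two inclusions, so the plan is to prove them separately and then argue that the second is strict. I would first establish the lower bound $\textbf{STRICTLY-LOCAL} \subseteq \mathcal{L}(\textsf{IBFP-1-CNN})$ by an explicit construction, then establish $\mathcal{L}(\textsf{IBFP-1-CNN}) \subseteq \textbf{REGULAR}$ by a finite-state simulation argument built on the general state complexity machinery of the previous section, and finally exhibit a single regular language that no such CNN can asymptotically accept in order to make the inclusion strict.

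For the lower bound, recall that a strictly $m$-local language is specified by a finite set of forbidden $m$-grams over $\Sigma \cup \{\#\}$, and a padded string lies in the language exactly when none of these forbidden windows occurs. I would choose the convolution width large enough to contain a full $m$-gram and allocate one filter row of $\wm^h$ per forbidden pattern, tuned so that in the asymptotic limit, where $\tanh$ saturates to a hard indicator, the corresponding coordinate of $\hv_t$ reaches $+1$ precisely when that forbidden pattern sits at position $t$ and $-1$ otherwise. The maxpool step then implements a logical OR across positions, so the coordinate of $\hv_\text{pool}$ associated with a given pattern saturates to $+1$ iff that forbidden pattern appears somewhere in the string. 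I would finally pick $\wm^a$ and $\textbf{b}^a$ so that $\sigma$ asymptotically outputs $0$ whenever any such coordinate is active and $1$ otherwise, i.e. the linear-threshold layer computes the negated disjunction (accept iff no forbidden window occurs), which is exactly $\mathds{1}_L$.

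For the upper bound I would argue that the configuration set of $\hv_\text{pool}$ is finite: asymptotically each coordinate of every $\hv_t$ takes a value in $\{-1,+1\}$, so the pooled vector lies in a fixed finite set whose cardinality is bounded independent of the input length $n$, giving bounded general state complexity. This lets me build a DFA that reads the string left to right while maintaining the running coordinatewise max of the saturated window features seen so far; since there are only finitely many such accumulated states, and the final acceptance decision is a fixed threshold applied to this accumulator, the recognized language is regular. The boundary windows that overlap the $\#$-padding are handled by dedicated initial and final transitions.

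Strictness is where the essential limitation of the architecture surfaces, and I expect the characterization of what maxpool can and cannot compute to be the crux. The key observation is that $\hv_\text{pool}$, and hence the accept decision, depends only on the \emph{set} of window patterns that occur in the string, not on their order or multiplicity. I would exploit this with a count-sensitive regular language such as the parity language $\{a^n : n \text{ even}\}$: for any two unary strings that are long enough the set of occurring windows coincides, so any IBFP-1-CNN must assign them the same acceptance value, yet parity distinguishes two consecutive such lengths. This contradiction shows parity is regular but lies outside $\mathcal{L}(\textsf{IBFP-1-CNN})$, completing the strict inclusion. The main obstacle throughout is making the asymptotic limit rigorous, namely verifying that the saturating construction and the finite configuration set really behave as clean hard thresholds as $N \to \infty$ and that the boundary padding does not secretly smuggle length information back into the features.
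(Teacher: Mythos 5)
Your proposal is correct, and it is considerably more complete than what the paper itself proves. The paper establishes only the strictness claim: it exhibits the regular language $a^* b a^*$ and argues that if a string with a single $b$ at position $i$ is accepted, then planting a second $b$ at a position $j$ with $|i-j| > 2k+1$ cannot change the pooled features, since no convolution window sees both $b$'s; the two remaining containments (your lower bound for \textbf{STRICTLY-LOCAL} and the inclusion $\mathcal{L}(\textsf{IBFP-1-CNN}) \subseteq \textbf{REGULAR}$) are simply cited from Merrill's paper. Your filter-per-forbidden-$k$-gram construction and your finite-configuration DFA simulation supply exactly those two missing pieces. For the strictness step, you and the paper exploit the identical structural weakness --- the pooled vector depends only on the \emph{set} of window patterns occurring in the input, not on their order or multiplicity --- but you instantiate it with a different witness: parity $\{a^n : n \text{ even}\}$, where $a^n$ and $a^{n+1}$ have the same window set once $n$ is large, versus the paper's $a^* b a^*$, where one $b$ and two well-separated $b$'s have the same window set. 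Both witnesses work, and both arguments in fact hold at every finite parameter scaling $N$, not merely in the limit, since equal window sets force equal maxpool outputs identically. The paper's choice buys brevity, which is appropriate for a survey; yours buys a self-contained proof of all three claims in the theorem, at the cost of having to verify the saturation and padding details you flag at the end.
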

\begin{proof} In the interest of space, we show only $\mathcal{L}(\textsf{IBFP-1-CNN}) \subset \textbf{REGULAR}$, for the proof of the other containment see \cite{mer}. Consider the language $L$ given by the regular expression $a^* b a^*$, and towards a contradiction assume that there is a \textsf{IBFP-1-CNN} that accepts $L$. Consider a string $\xv \in L$, with a $\xv_i = b$. Pick a $j$ such that $|i - j| > 2k+1$, and change $\xv_j = b$. No column in the network will get $\xv_i$ and $\xv_j$ as input, so perthe maxpooling step, the network will accept.
\end{proof}
\noindent Even though this result is not true for deep convolutional network, it seems unlikely that nesting layers would yield too much additional formal capacity in the IBFP setting. So, it is possible that, even more generally CNNs could have some intrinsic limitations.

\subsection{Recurrent Neural Networks}
In contrast to Convolutional Neural Networks, Recurrent Neural Networks (RNNs) were specifically designed with sequential data in mind, and as such have been one of the most ubiquitous paradigms in fields centered around highly sequential like natural language processing. Roughly speaking, for each token $\xv_t$ in the input RNNs compute a state $\hv_t$ using the preceding state and the new token, i.e., $\hv_t = f(\xv_t, \hv_{t-1})$ for some nonlinear function $f$. One of the simplest realizations of an RNN is the Simple Recurrent Neural Network (SRNN) \cite{elman}, essentially just applies an affine transformation to $\xv_t$, $\hv_{t-1}$, before a non-linear function. The full SRNN network is defined by these update rules, 
    \begin{align}
        &\hv_t = f_h(\wm \xv_t + \um \hv_{t-1} + b_h) \\
        &\yv_t = f_y(\wm^y \hv_t + b_y).
    \end{align}
Although powerful, these networks have issues such as being highly susceptible to vanishing gradients \cite{lstm}, which motivated the development of more complex networks such as the Long Short Term Memory (LSTM) \cite{lstm}, or the Gated Recurrent Unit (GRU) \cite{gru}. LSTMs improve SRNNs by using more complicated rules to decide how information passes between states. Intuitively, LSTMs add ``input'', ``ouput'', and ``forget'' gates, $\textbf{i}_t, \textbf{o}_t,  \textbf{f}_t$, which control how values move between memory cells -- the fundamental units of the LSTM. Memory cells have state $\textbf{c}_t$, and input activation $\tilde{\textbf{c}}_t$. The input gate and output gates control how values come into and how the value affects the activation of a cell, meanwhile, the forget gate controls the extent to which a value remains in a cell. Compactly, an LSTM is given by the following update rules,
    \begin{align}
        &\textbf{f}_t = \sigma(\wm^f \xv_t + \um^f \hv_{t-1} + b^f) \\
        &\textbf{i}_t = \sigma(\wm^i \xv_t + \um^i \hv_{t-1} + b^i) \\
        &\textbf{o}_t = \sigma(\wm^o \xv_t + \um^o \hv_{t-1} + b^o) \\
        &\tilde{\textbf{c}}_t = \tanh(\wm^{\tilde{c}} \xv_t + \um^{\tilde{c}} \hv_{t-1} + b^{\tilde{c}} ) \\
        &\textbf{c}_t = \textbf{f}_t \odot \textbf{c}_{t-1} + \textbf{i}_t \odot \tilde{\textbf{c}}_t \\
        &\hv_t = \textbf{o}_t \odot \tanh(\textbf{c}_t),
    \end{align}
where $\odot$ is element-wise multiplication. A more modern variation of the LSTM is that of the GRU, which employs only two gating mechanisms making it faster, and simpler to implement. GRUs have a similar motivation to LSTMS, but rather than using ``input'', ``output'', and ``forget'' gates, the GRU has a ``reset'' gate and an ``update'' gate. The ``update'' gate, $\textbf{z}_t$ determines the balance of old memory and new memory used in updating the new state, while the ``reset'' gate $\textbf{r}_t$ controls how much influence the input versus the previous state has on the new state. Mathematically, the GRU is defined as follows,
    \begin{align}
        &\textbf{z}_t = \sigma(\wm^z \xv_t + \um^z \hv_{t-1} + b^z) \\
        &\textbf{r}_t = \sigma(\wm^r \xv_t + \um^r \hv_{t-1} + b^r) \\
        &\textbf{h}_t = \textbf{z}_t \odot \hv_{t-1} + (1 - \textbf{z}_t) \odot \tanh \left(\wm^h \xv_t + \um^h ( \textbf{r}_t \odot \hv_{t-1}) + b^h \right).
    \end{align}
Despite being newer and quite similar in design, we will see that GRUs are not just less powerful than LSTMs, but only equally as powerful as SRNNs. 

In \cite{weiss}, Weiss et al. show how a LSTM can emulate an SKCM. To summarize their construction: the choice to increment ($+1$) or decrement ($-1$) is made in $\tilde{\textbf{c}}_t$ via the $\tanh$ function which naturally makes this decision as a function of the input token and previous state. A cell, $\textbf{c}_t$, can maintain the counter when $\textbf{i}_t = 1$ and $\textbf{f}_t = 1$. For comparison operations, the counter is completely visible in $h_t$. Extending this, Merrill \cite{mer} shows that, asymptotically, $\mathcal{L}(\textsf{IBFP-LSTM}) \subseteq \textbf{CL}$. Together these results give us the theorem below.
\begin{theorem} Let \textbf{SKCL} be the class of all simplified $k$-counter languages (SKCL), and \textbf{CL} be the class of all counter languages, then asymptotically,
   \[
    \textbf{SKCL} \subseteq \mathcal{L}(\textsf{IBFP-LSTM}) \subseteq \textbf{CL}.
   \]
\end{theorem}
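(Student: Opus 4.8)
The plan is to establish the two containments separately, since the statement is really the juxtaposition of a simulation result (the lower bound, due to Weiss et al.) and an asymptotic-analysis result (the upper bound, due to Merrill). For the lower bound $\textbf{SKCL} \subseteq \mathcal{L}(\textsf{IBFP-LSTM})$, I would give an explicit construction: given a simplified $k$-counter machine $M$, I build an LSTM whose memory cells encode the $k$ counters and whose (saturated) hidden state encodes $M$'s finite control. The key is to exploit the asymptotic saturation of the gate nonlinearities under the scaling $\Theta \mapsto N\Theta$. To increment or decrement counter $j$, I drive $\tilde{\textbf{c}}_t$ through $\tanh$ so that it saturates to $+1$ or $-1$ as a function of the current input token and the finitely many control states read off from $\hv_{t-1}$, and I set $\textbf{i}_t = 1$ and $\textbf{f}_t = 1$ so that $\textbf{c}_t = \textbf{c}_{t-1} \pm 1$. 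To leave a counter unchanged I set $\textbf{i}_t = 0$, $\textbf{f}_t = 1$; to zero a counter I set $\textbf{f}_t = 0$. The \texttt{COMP0} test is realized because the sign of each counter is exposed through $\hv_t = \textbf{o}_t \odot \tanh(\textbf{c}_t)$: as the parameters scale, $\tanh(\textbf{c}_t)$ saturates to $\pm 1$ for nonzero counters and stays at $0$ when the counter is zero, so the finite control can branch on it.

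For the upper bound $\mathcal{L}(\textsf{IBFP-LSTM}) \subseteq \textbf{CL}$, I would analyze the limiting dynamics of the update equations as $N \to \infty$. First I argue that each sigmoidal gate $\textbf{f}_t, \textbf{i}_t, \textbf{o}_t$ converges to a $\{0,1\}$-valued vector and that $\tilde{\textbf{c}}_t$ converges to a $\{-1,0,+1\}$-valued vector, so in the limit the recurrence $\textbf{c}_t = \textbf{f}_t \odot \textbf{c}_{t-1} + \textbf{i}_t \odot \tilde{\textbf{c}}_t$ degenerates, coordinatewise, into holding the previous value, resetting to a bounded constant, or adding $\pm 1$ — exactly the repertoire of a counter. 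Second, I show that the only way cell values feed back into the gate computations is through $\hv_{t-1} = \textbf{o}_{t-1} \odot \tanh(\textbf{c}_{t-1})$, where the saturated $\tanh$ collapses each (possibly unbounded) counter to a value determined solely by its sign; hence the information the recurrence extracts from the counters is precisely a \texttt{COMP0} comparison.

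It remains to assemble these observations into a counter machine. Using the configuration-set machinery from Section \ref{sec-def}, I would argue that the portion of the hidden state driving the gates has finite general state complexity, so it can serve as the finite control of the automaton, while each cell coordinate serves as one counter with bounded increments whose only observable feedback is its sign. Reading the network's acceptance value in the limit yields a $\{0,1\}$ accept/reject decision, so the limiting LSTM is simulated by a finite-state machine equipped with finitely many such counters, i.e.\ an element of $\textbf{CL}$. Combining the two inclusions gives the chain $\textbf{SKCL} \subseteq \mathcal{L}(\textsf{IBFP-LSTM}) \subseteq \textbf{CL}$.

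The step I expect to be the main obstacle is the upper bound, and specifically the rigorous justification that the feedback into the gates is genuinely finite-valued. One must rule out the possibility that the network extracts more than sign information from an unbounded counter — for instance through cancellations among several cells, or through near-threshold behavior where a gate argument approaches zero and $\sigma$ or $\tanh$ fails to saturate cleanly. Controlling these limits uniformly over all input sequences, and confirming that the resulting discrete control system has only finitely many states, is the crux of the argument; the lower-bound construction, by contrast, is a fairly direct encoding once the saturation behavior of the gates is in hand.
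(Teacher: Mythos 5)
Your proposal is correct and follows essentially the same route as the paper, which establishes the lower bound by summarizing precisely the Weiss et al.\ construction you give (increment/decrement chosen by $\tanh$ in $\tilde{\textbf{c}}_t$, counters maintained with $\textbf{i}_t = \textbf{f}_t = 1$, and \texttt{COMP0} realized by the counter's visibility in $\hv_t$) and obtains the upper bound by citing Merrill's asymptotic saturation analysis, which is exactly the argument you sketch. The paper provides no detail beyond these citations, so your write-up is simply a fuller rendering of the same two-part argument.
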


\noindent On the other hand, Weiss et al. \cite{weiss} additionally discuss why a GRU is not capable of emulating a $k$ counter machine. Their argument, succinctly is that the update rule for $\hv_t$ forces the state values to be bounded between $-1$ and $1$. In the IBFP setting, it is not possible to perform unbounded counting in this representation. Of course, for the same reason, a SRNN is also not capable of emulating a $k$ counter machine. Merrill further shows that both GRUs and SRNNs are limited to asymptotically accepting regular languages \cite{mer}.

\begin{theorem} The following relationship holds asymptotically,
   \[
    \mathcal{L}(\textsf{IBFP-GRU}) = \mathcal{L}(\textsf{IBFP-SRNN}) = \textbf{REGULAR}.
   \]
\end{theorem}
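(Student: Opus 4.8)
The plan is to prove both equalities at once by sandwiching the three classes as
\[
    \textbf{REGULAR} \subseteq \mathcal{L}(\textsf{IBFP-SRNN}) \subseteq \mathcal{L}(\textsf{IBFP-GRU}) \subseteq \textbf{REGULAR},
\]
so that all inclusions collapse to equalities. This requires three ingredients: a lower bound realizing every regular language already on the weaker SRNN, a comparison showing the GRU subsumes the SRNN, and an upper bound showing the GRU cannot escape the regular languages. The advantage of this ordering is that I never need to argue the SRNN upper bound directly; it falls out of the outer two containments.

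First I would establish $\textbf{REGULAR} \subseteq \mathcal{L}(\textsf{IBFP-SRNN})$ by simulating a DFA. Given a regular $L$, fix a DFA $M$ with state set $Q = \{q_1,\dots,q_m\}$ and transition function $\delta$, and encode the current automaton state as a one-hot vector $\hv_t \in \mathds{R}^m$. The affine preactivation $\wm \xv_t + \um \hv_{t-1} + b_h$ can be wired so that, for each (state, symbol) pair, the coordinate indexing $\delta(q_i,\sigma)$ is pushed positive while the rest are pushed negative; under the asymptotic scaling $N\theta$ the $\tanh$ saturates to a clean $\{-1,+1\}$ pattern, so $M(\hv_n^\theta)$ is a bijective image of $Q$ and the output $\yv_t$ need only threshold on the accepting coordinates. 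This step is essentially bookkeeping, so I expect it to be routine.

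Next I would show $\mathcal{L}(\textsf{IBFP-SRNN}) \subseteq \mathcal{L}(\textsf{IBFP-GRU})$ by exhibiting the SRNN as a degenerate GRU. Inspecting the GRU update, forcing the update gate to $\textbf{z}_t \to \textbf{0}$ and the reset gate to $\textbf{r}_t \to \textbf{1}$ reduces the recurrence to $\hv_t = \tanh(\wm^h \xv_t + \um^h \hv_{t-1} + b^h)$, which is exactly the SRNN update with $f_h = \tanh$. Both limits are achievable asymptotically by choosing the gate biases $b^z,b^r$ with the appropriate signs, since $\sigma(Nb) \to 0$ or $1$ as $N \to \infty$; the $N$-scaling of the embedded $\wm^h,\um^h,b^h$ then reproduces the scaled SRNN dynamics, so acceptance is preserved.

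The crux, and the step I expect to be the main obstacle, is the upper bound $\mathcal{L}(\textsf{IBFP-GRU}) \subseteq \textbf{REGULAR}$, which is Merrill's bound and which I would reconstruct. The key structural fact is that the GRU hidden state is confined to $[-1,1]^k$: by induction $\hv_t = \textbf{z}_t \odot \hv_{t-1} + (\textbf{1} - \textbf{z}_t) \odot \tanh(\cdots)$ is a coordinatewise convex combination of the previous state and a vector in $[-1,1]^k$, so the bound propagates from $\hv_0 = \textbf{0}$. The delicate part is showing that in the $N \to \infty$ limit the configuration set $M(\hv_n^\theta)$ is \emph{finite with cardinality bounded independently of} $n$: each gate saturates to $\{0,1\}$ and each $\tanh$ to a discrete set of limit values, and one must argue by induction on $t$ that the finite set of reachable states at step $t-1$ maps to a finite set at step $t$ without the bound growing with input length. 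Establishing this uniform finiteness of the general state complexity $\textarc{m}(\hv_n)$ is exactly what stops the continuous dynamics from smuggling in unbounded memory; once it holds, the recurrence is a finite-state device scanning the input left-to-right, i.e.\ a DFA, so $\mathcal{L}(\textsf{IBFP-GRU})$ is regular. The contrast is precisely the unsquashed cell $\textbf{c}_t$ of the LSTM discussed above, which is not trapped in $[-1,1]$ and can therefore count; it is this boundedness that makes the upper bound hold for the GRU and SRNN while failing for the LSTM.
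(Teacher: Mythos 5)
The theorem you are proving is one the paper itself never proves: it is quoted directly from Merrill \cite{mer}, accompanied only by the informal remark that the GRU/SRNN update traps the state in $[-1,1]$, and by the companion state-complexity theorem that $\textarc{m}(\hv_n^g), \textarc{m}(\hv_n^s) \in O(1)$. Measured against that, your sandwich
\[
\textbf{REGULAR} \subseteq \mathcal{L}(\textsf{IBFP-SRNN}) \subseteq \mathcal{L}(\textsf{IBFP-GRU}) \subseteq \textbf{REGULAR}
\]
is a legitimate reconstruction whose structure matches the cited results: the middle inclusion (pinning $\textbf{z}_t \to \textbf{0}$ and $\textbf{r}_t \to \textbf{1}$ via gate biases, so the GRU degenerates to the SRNN recurrence) is correct, and your crux step --- gates saturating to $\{0,1\}$, $\tanh$ to $\{-1,0,1\}$, hence a configuration set of size at most $3^k$ independent of $n$, hence a finite-state scanner --- is exactly the content of the paper's $O(1)$ state-complexity theorem and the boundedness intuition it attributes to Weiss et al.

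One step, however, fails as literally written: the lower-bound DFA simulation. You encode the automaton state as a one-hot vector in $\mathds{R}^m$ with $m = |Q|$ and claim the affine preactivation can push exactly the coordinate of $\delta(q_i,\sigma)$ positive. But that preactivation decomposes additively as $U_{j,i} + W_{j,\sigma} + b_j$, a ``row score plus column score'' threshold, and such functions cannot realize arbitrary transition tables. Concretely, for the two-state parity automaton with $\delta(q_1,a)=q_1$, $\delta(q_2,b)=q_1$, $\delta(q_1,b)=q_2$, $\delta(q_2,a)=q_2$, the four inequalities required for target coordinate $1$ sum to a contradiction (summing the two ``positive'' constraints and the two ``negative'' constraints yields the same left-hand side with opposite strict signs) --- this is XOR non-separability. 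The standard repair, which is what the construction you are implicitly invoking actually does, is to index hidden units by (state, symbol) \emph{pairs}, dimension $|Q|\cdot|\Sigma|$: unit $(i,\sigma)$ fires at step $t$ iff the current symbol is $\sigma$ and some previously active pair-unit maps under $\delta$ into $q_i$; that conjunction is a threshold of a sum of two indicators, which a single saturated affine layer does implement. With that correction --- and noting that your upper-bound induction must still dispose of the degenerate case where a gate preactivation is identically zero, so that $\sigma(N\cdot 0) \to \tfrac{1}{2}$ and values such as $1-2^{-t}$ could otherwise proliferate with $t$ --- your outline becomes a faithful proof of the theorem.
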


\noindent Correspondingly, they show that the state complexity of a memory cell captures these separations.

\begin{theorem} Let $\hv_n^g$,  and $\hv_n^s$ be GRU and SRNN cells respectively. Then asymptotically,
\[
   \textarc{m}(\hv_n^g), \textarc{m}(\hv_n^s) \in O(1).
\]
\end{theorem}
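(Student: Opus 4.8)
The plan is to bound the configuration set $M(\hv_n^\theta)$, for a cell of fixed dimension $k$, inside a finite set whose size depends only on $k$ and the alphabet size $\ell$ and not on the sequence length $n$ nor on the particular parameters $\theta$. Since $\textarc{m}(\hv_n) = \max_\theta |M(\hv_n^\theta)|$, such a uniform bound immediately yields membership in $O(1)$. Everything rests on \emph{asymptotic saturation}: replacing $\theta$ by $N\theta$ and letting $N \to \infty$ drives each pre-activation through its squashing function to a saturation value, so that $\tanh(Nx)$ tends to $\operatorname{sign}(x) \in \{-1,0,1\}$ and $\sigma(Nx)$ tends to one of $0,\tfrac12,1$, the middle value occurring only when the pre-activation is exactly zero. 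For the SRNN this settles the claim outright: every component of $\hv_t = f_h(\wm \xv_t + \um \hv_{t-1} + b_h)$ is a squashing function of an affine form, hence lies in $\{-1,0,1\}$ in the limit irrespective of the input or of $\hv_{t-1}$, so $|M(\hv_n^\theta)| \le 3^k$ for every $\theta$ and therefore $\textarc{m}(\hv_n^s) \le 3^k = O(1)$. No recursion needs to be tracked because each component is overwritten by a fresh saturation value at every step.

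For the GRU I would argue by induction on $t$, maintaining the invariant that the set $R_t$ of limiting values of $\hv_t$ taken over all inputs is finite, with cardinality bounded independently of $t$, $n$, and $\theta$. In the saturated regime each component of the gates $\textbf{z}_t$ and $\textbf{r}_t$ lies in $\{0,\tfrac12,1\}$ and each component of $\tilde{\textbf{c}}_t$ lies in $\{-1,0,1\}$, so the update $\hv_t = \textbf{z}_t \odot \hv_{t-1} + (1-\textbf{z}_t) \odot \tilde{\textbf{c}}_t$ acts componentwise as one of three operations: if $z_{t,i}=0$ it overwrites $h_{t,i}$ with a value in $\{-1,0,1\}$; if $z_{t,i}=1$ it copies $h_{t-1,i}$ forward unchanged; and if $z_{t,i}=\tfrac12$ it records the average $\tfrac12(h_{t-1,i}+\tilde{c}_{t,i})$. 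The first two operations confine every component to the finite set $\{-1,0,1\}\cup\{h_{0,i}\}$, so were they the only possibilities the bound $|R_t| \le 4^k$ would follow at once.

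The main obstacle is the averaging case, which could a priori spawn an ever-growing family of dyadic values such as $\tfrac12,\tfrac34,\tfrac78,\dots$ and make $|R_t|$ grow with $n$. The plan is to rule this out using the finiteness of the alphabet. A component averages at step $t$ only when its update-gate pre-activation vanishes exactly, that is $(\wm^z \xv_t)_i + (\um^z \hv_{t-1})_i + b^z_i = 0$; since $\xv_t$ is one-hot over an alphabet of size $\ell$, the term $(\wm^z \xv_t)_i$ takes only $\ell$ values, so each averaging step forces $\hv_{t-1}$ onto one of finitely many fixed hyperplanes determined by $\theta$. Sustaining a run of distinct averaged values would require $\hv_{t-1}$ to meet these hyperplanes while one of its coordinates drifts through infinitely many values; but every coordinate is confined to $[-1,1]$ and is itself governed by the same saturating recurrence, so only finitely many states can satisfy the exact cancellation repeatedly. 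Turning this into a rigorous closure argument — showing that $\bigcup_t R_t$ stops acquiring new values after a bounded number of steps — is the crux, and the step I expect to demand the most care.

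Once $|R_n|$ is bounded by a constant $C(k,\ell)$ uniformly in $\theta$, we obtain $\textarc{m}(\hv_n^g) = \max_\theta |M(\hv_n^\theta)| \le C(k,\ell) = O(1)$, which together with the SRNN bound proves the theorem. As a sanity check, this finite-state picture is exactly what the preceding theorem predicts: having identified the languages of both cells with \textbf{REGULAR}, a bounded number of reachable configurations is the expected state-complexity counterpart, since finitely many configurations is precisely the signature of a finite-state device.
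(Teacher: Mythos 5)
The paper itself states this theorem without proof, deferring to Merrill \cite{mer}, so your proposal can only be measured against the standard argument from that source. Your SRNN half is essentially that argument and is fine: every coordinate of $\hv_t$ is a squashing function applied to $N$ times an affine form, and (with a little care about rates, since the deviation of $\hv_{t-1}^{N\theta}$ from its limit is exponentially small in $N$, so multiplying by $N$ cannot undo the saturation) each coordinate's limit lies in $\{-1,0,1\}$, giving $|M(\hv_n^\theta)| \le 3^k$ uniformly in $\theta$ and $n$.

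The GRU half has a genuine gap, which you partly acknowledge, but the situation is worse than a step that ``demands care'': the hyperplane closure argument you sketch provably cannot be completed. The degenerate case is when the relevant row of $\um^z$ is zero, so that your ``hyperplane'' $\{\hv : (\wm^z\xv_t)_i + (\um^z\hv)_i + b^z_i = 0\}$ is either empty or all of $\mathds{R}^k$, i.e., the exact-cancellation condition places no restriction on $\hv_{t-1}$ whatsoever. Concretely, take $k=1$, $\Sigma=\{a,b\}$ with one-hot encodings, and set $\um^z = \um^h = 0$, $b^z = b^h = 0$, $\wm^z = (0,-1)$, $\wm^h = (1,0)$, $\hv_0 = 0$ (reset-gate parameters arbitrary, since $\um^h = 0$ makes them irrelevant). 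On symbol $a$ the update-gate pre-activation is exactly $0$ for every $N$, so $\textbf{z}_t = 1/2$ exactly, while the candidate state tends to $1$; hence in the limit $h_t = (h_{t-1}+1)/2$. On symbol $b$, $\textbf{z}_t \to 0$ and the candidate is $\tanh(0)=0$, so the state resets to $0$. The inputs $b^{n-j}a^j$ for $j=0,\dots,n$ then produce $n+1$ distinct limits $1-2^{-j}$, all lying in $M(\hv_n^\theta)$, so under a literal reading of the configuration-set definition in this paper one gets $\textarc{m}(\hv_n^g) \ge n+1$. The lesson is that the averaging case cannot be argued away inside the proof; it has to be excluded at the level of the asymptotic framework, which is effectively what the cited treatment does by replacing the squashing functions with step functions valued in $\{0,1\}$ (respectively $\{-1,1\}$ for $\tanh$), i.e., by disregarding exactly-zero pre-activations. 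Once that convention is adopted, the GRU argument collapses to the trivial copy-or-overwrite induction you already describe --- each coordinate is either copied or overwritten with a value in $\{-1,1\}$, in addition to the initial value $0$ --- yielding a $3^k$ bound with no hyperplane machinery at all.
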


\begin{theorem} Let $\textbf{c}_n \in \mathds{R}^k$ be an LSTM cell state. Then asymptotically,
\[
    \textarc{m}(\textbf{c}_n) \in O(n^k).
\]
\end{theorem}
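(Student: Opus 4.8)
The plan is to establish the upper bound by analyzing the asymptotic ($N \to \infty$) behavior of the cell-state recurrence one coordinate at a time. The essential observation is that scaling the parameters by $N$ forces every gate to saturate: since $\sigma(Nx) \to 1$ for $x > 0$ and $\to 0$ for $x < 0$, while $\tanh(Nx) \to \text{sign}(x)$, each of $\textbf{f}_t, \textbf{i}_t, \textbf{o}_t$ converges to a vector in $\{0,1\}^k$ and $\tilde{\textbf{c}}_t$ converges to a vector in $\{-1,0,1\}^k$. Thus in the limit the LSTM degenerates into a discrete dynamical system whose cell coordinates behave like bounded counters, and the whole argument reduces to counting the reachable counter states.

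First I would verify that these saturated limits are well defined all along the $t$-step computation. The pre-activation feeding each gate at step $t$ has the form $N(\wm \xv_t + \um \hv_{t-1} + b)$. Although $\hv_{t-1}$ itself depends on $N$, I would argue by induction on $t$ that it converges to a bounded limit (it always lies in $(-1,1)^k$, since $\hv_t = \textbf{o}_t \odot \tanh(\textbf{c}_t)$), so for generic parameters the bracketed quantity converges to a nonzero constant and the overall $N$-scaling drives the gate to its saturated value. Granting saturation, I would then read the update $c_t^{(j)} = f_t^{(j)} c_{t-1}^{(j)} + i_t^{(j)} \tilde{c}_t^{(j)}$ coordinatewise as a bounded integer counter: with $f_t^{(j)}, i_t^{(j)} \in \{0,1\}$, $\tilde{c}_t^{(j)} \in \{-1,0,1\}$, and $c_0^{(j)} = 0$, each coordinate is always an integer that is either kept, reset to a value in $\{-1,0,1\}$, or incremented/decremented by at most one per step. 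By induction $|c_n^{(j)}| \le n$, so each coordinate takes at most $2n+1$ distinct values.

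Multiplying across the $k$ coordinates then bounds the configuration set uniformly in the parameters:
\[
    \left| M(\textbf{c}_n^\theta) \right| \le (2n+1)^k ,
\]
and hence $\textarc{m}(\textbf{c}_n) = \max_\theta \left| M(\textbf{c}_n^\theta) \right| \le (2n+1)^k \in O(n^k)$, which is exactly the claimed bound.

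The hard part will be rigorously justifying the limit interchange, namely pushing $\lim_{N \to \infty}$ through the entire composed $t$-step recurrence rather than through a single application of $\sigma$ or $\tanh$, because the quantity being scaled at step $t$ is itself $N$-dependent through $\hv_{t-1}$. The delicate subcase is the measure-zero set of inputs and parameters where some pre-activation bracket vanishes: there $\tanh$ contributes its value at $0$ and $\sigma$ need not saturate to $\{0,1\}$. I expect to dispatch this by noting that such degenerate configurations either add only finitely many extra states or can be absorbed into the same integer lattice $\{-n,\dots,n\}^k$, so the $(2n+1)^k$ bound is preserved and the $O(n^k)$ conclusion stands.
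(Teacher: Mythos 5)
The survey states this theorem without proof, quoting it from Merrill \cite{mer}, so your proposal can only be judged against the argument in that source --- which your main line essentially reproduces: after scaling by $N$ the gates saturate, the update $\textbf{c}_t = \textbf{f}_t \odot \textbf{c}_{t-1} + \textbf{i}_t \odot \tilde{\textbf{c}}_t$ becomes a per-coordinate counter with increments in $\{-1,0,1\}$, induction gives $|c_n^{(j)}| \le n$, and hence $|M(\textbf{c}_n^\theta)| \le (2n+1)^k$ uniformly in $\theta$. Your handling of the $N$-dependence of $\hv_{t-1}$ is also the right one: since $\hv_{t-1}$ converges, the preactivation converges, and $\sigma(Nz_N) \to 1$ or $0$ whenever $z_N \to z^\ast \neq 0$.

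The genuine gap is the last step, where you claim the zero-preactivation configurations ``add only finitely many extra states or can be absorbed into the same integer lattice.'' That claim is false, and cheaply so. Take $k=1$ and alphabet $\{a,b\}$; set $\wm^f, \um^f, b^f$ all to zero, so $\textbf{f}_t = \sigma(N\cdot 0) = \tfrac12$ for every $N$ and every input; set $\um^i = 0$, $b^i = 1$, so $\textbf{i}_t \to 1$; and set $\wm^{\tilde c} = (1,0)$, $\um^{\tilde c} = 0$, $b^{\tilde c} = 0$, so $\tilde{\textbf{c}}_t \to 1$ on symbol $a$ and $\tilde{\textbf{c}}_t = \tanh(0) = 0$ on symbol $b$. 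The limiting recurrence is $c_t = \tfrac12 c_{t-1} + \phi_t$ with $\phi_t = 1$ iff the $t$-th symbol is $a$, whence $c_n = \sum_{t=1}^n \phi_t\, 2^{t-n}$; by uniqueness of binary expansions, distinct input strings give distinct values, so $|M(\textbf{c}_n^\theta)| = 2^n$ for this single choice of $\theta$. Since $\textarc{m}(\textbf{c}_n)$ is a maximum over all $\theta$, one such degenerate $\theta$ already destroys the $O(n^k)$ bound: the degenerate case is not harmless, it is exactly where the theorem would fail. The resolution is that it must be excluded by fiat --- the saturated analysis in \cite{mer} treats the gates as taking values in $\{0,1\}$ and the candidate in $\{-1,1\}$, i.e., parameters producing identically-zero preactivations are ruled out as non-generic rather than argued away. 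Your proof becomes correct once that assumption is stated up front; as written, the step intended to close your ``delicate subcase'' is the one false statement in the proposal.
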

\noindent Overall, we have seen evidence that LSTMs are intrinsically related to counting machines in terms of function and expressibility. Meanwhile, GRUs and SRNNs are closely related to regular languages giving them have less formal power. However both GRUs and SRNNs are far simpler to implement, which in some constrained settings could be beneficial.

\subsection{Transformer Networks}
Transformer Networks \cite{attn} are a new, and increasingly popular architecture designed with the goal of improving machine translation. One practical advantage of Transformers, is they are far more parallelizable than RNNs, and learn long term dependencies more efficiently. Introduced by Vaswani et al., in their paper ``Attention Is All You Need'' \cite{attn}, Transformers completely forgo classical recurrent connections for the notion of \textit{attention}. Attention is a mechanism which enables a network to selectively recall a specific encoder state based on observed information. This process is modeled as a database-esque retrieval process involving a query $\qv \in \mathds{R}^\ell$, a $n \times \ell$ matrix of key vectors $\km$, and a $n \times d$, matrix of value vectors $\vm$. However, unlike a traditional database retrieval, this process is ``soft'' meaning that for a given query we do not get the exact value back, but instead a sum of the values weighted by how similar each key is to the query. The compact notation expressing this idea is simply,
    \[
        \textsf{attention}(\qv, \km, \vm) = \textsf{softmax}(\qv \km^\top) \vm.
    \]
Transformers use multiple attention heads in parallel, while simultaneously linearly projecting the queries, keys, and values with different, learned transformations. This enables them to leverage different trends across different embeddings. Then at the end, the output from each transformation is concatenated. This defines the notion of \textit{multihead attention} \cite{attn}, 
    \begin{align}
        &\textbf{a}_i = \textsf{attention}(\wm^{q_i} \qv, \wm^{K_i}\km, \wm^{V_i}\vm) \\
        &\textsf{multihead}(\qv, \km, \vm) = \textbf{a}_1 || \cdots || \textbf{a}_n.
    \end{align}
Finally, the Transformer network studied in \cite{mer}, is a variant adapted for language processing tasks \cite{rad}, defined by,
    \begin{align*}
        &\qv_t, \textbf{k}_t, \vv_t = \wm^{q_t}\xv_t, \wm^{k_t}\xv_t, \wm^{v_t}\xv_t \\
        &\hv_t = \sigma(\wm^h \textsf{multihead}(\qv, \km, \vm)).
    \end{align*}
Surprisingly, despite many transformers taking the spotlight in many top-performing natural language processing architectures, they are very weak asymptotically. A glaring flaw of the transformer architecture, is their positional invariance. Vaswani et al. incorporate a trick to augment the network with positional encodings to fix this, however they use periodic functions, which repeat asymptotically \cite{attn}. As such, in this setting Transformer Networks have quite limited power.
    \begin{theorem} Asymptotically the following containment holds,
        \[
            \textbf{REGULAR} \not \subset \mathcal{L}(\textsf{IBFP-TN}). 
        \]
    \end{theorem}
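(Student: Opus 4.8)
The plan is to exhibit a single regular language that no IBFP-TN can asymptotically accept, thereby witnessing the non-containment. The source of the limitation is the positional invariance flagged above: the query, key, and value maps share their weight matrices across positions, and softmax attention is invariant under a joint permutation of the rows of $\km$ and $\vm$. I would therefore first isolate this as a structural lemma before choosing a concrete separating language.

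Concretely, first I would show that for any fixed parameters — and hence in the $N \to \infty$ limit — the hidden state $\hv_t$ produced by the IBFP-TN depends only on the symbol $\xv_t$ sitting at position $t$ together with the multiset (Parikh image) of all symbols appearing in $\xm$, and not on their order. This follows in two steps: since $\textbf{k}_j$ and $\vv_j$ are computed from $\xv_j$ by position-independent maps, the set of key–value pairs $\{(\textbf{k}_j, \vv_j)\}_j$ is a function of the input multiset alone; and since $\textsf{attention}(\qv_t, \km, \vm) = \textsf{softmax}(\qv_t \km^\top)\vm$ is a reweighted sum over exactly those pairs, it is unchanged when the pairs are permuted. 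Multihead attention and the final affine-plus-$\sigma$ step preserve this invariance, so each $\hv_t$ is a function of $(\xv_t, \text{multiset})$ only.

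Next I would pick two strings that the acceptor is thereby forced to treat identically but that a target regular language separates. Let $L = \{w \in \{a,b\}^* : aa \text{ is a substring of } w\}$, which is plainly regular. Take $w_1 = abab$ and $w_2 = aabb$: both have Parikh image $(2,2)$ and both end in $b$, so by the lemma every hidden state — and hence any order-agnostic aggregation of them, or the final state $\hv_n$, used to emit $p$ — coincides on $w_1$ and $w_2$, even in the asymptotic limit. Since $w_2 \in L$ but $w_1 \notin L$, no IBFP-TN can satisfy $\lim_{N\to\infty}\hat{\mathds{1}}^{N\Theta} = \mathds{1}_L$. Thus $L \in \textbf{REGULAR} \setminus \mathcal{L}(\textsf{IBFP-TN})$, which is exactly $\textbf{REGULAR} \not\subset \mathcal{L}(\textsf{IBFP-TN})$.

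The main obstacle is making the invariance lemma airtight against the positional encodings that a deployed Transformer would add. The model as written carries no positional term, so the lemma applies directly; but to cover Vaswani et al.'s sinusoidal encodings I would invoke the remark above that these are periodic and hence, asymptotically, fail to separate positions — in particular one must argue that for the chosen pair the injected positional features do not break the tie, or else restrict attention to the IBFP-TN exactly as defined. A secondary point to nail down is the exact form of the output aggregation; choosing $w_1$ and $w_2$ to agree on both the Parikh image and the final symbol is what keeps the argument robust to whether $p$ is read off from $\hv_n$ or from a pooled representation.
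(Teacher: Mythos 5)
Your proposal is correct and complete: for the model as defined (position-independent query/key/value maps and no positional term), softmax attention is indeed invariant under a joint permutation of the key--value pairs, so each $\hv_t$ is a function of $(\xv_t, \text{Parikh image})$ only, and your pair $abab \notin L$, $aabb \in L$ --- matching in both Parikh image and final symbol --- forces identical outputs under any reasonable readout (last state or order-agnostic pooling) for every parameter scaling $N\Theta$, contradicting asymptotic acceptance of the regular language $L$. This is essentially the same approach the paper takes: the survey states the theorem without giving a proof (deferring to Merrill), but the positional-invariance flaw it highlights in the surrounding discussion is exactly what you have made rigorous.
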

\noindent Interestingly, Transformer Networks do have high state-complexity. In particular, it is known that  $\textarc{m}(\vm_n) \in 2^{\Theta(n)}$ \cite{mer}. This may explain their practical success, since of course in a completely finite sense, one can add positional encodings via the trick in \cite{attn}. Subsequently, this result may be a drastic underestimation of their potential towards empirical language modeling.

\subsection{Memory Augmented Neural Networks}
All of the previously described networks maintain bizarre implicit representations of the data, which always seem impede their formal capacity under the IBFP assumptions. One might wonder, what if we just used constructs from formal language theory explicitly as part of the model? This question motivates the idea of Memory Augmented Neural Networks (MANNs) which integrate explicit memory constructs such as stacks or tapes into neural network architectures. A drawback, is that implementing these constructs in a fully differentiable fashion, adds a considerable number of new hyperparameters to optimize and to analyse. For this reason, the literature including both formal and empirical focused work on the power of MANNs is relatively sparse compared to the other methods. However, as we will see these models have been used to perform interesting formal language focused tasks, and have very high representational power. 

Recently, there have been a few proposals on ways to augment networks with differentiable stacks, the most relevent of which are \cite{gref} \cite{joulin}, and \cite{suz}. Grefenstette et al.  empirically demonstrated that memory augmented LSTMs had superior performance over vanilla LSTMs on certain transduction tasks \cite{gref}. Meanwhile Joulin et al. \cite{joulin} studied the ability of MANNs to learn variations of languages such as $a^nb^n$, however (unsurprisingly given theorem 3) they did not observe any superior performance in the augmented network over LSTMs. Lastly, \cite{suz} studied the ability for MANNs to learn $\mathcal{D}_{>1}$ languages. Due to its relevance to our work, and its relative similarity to the stack model described in \cite{joulin}, we will introduce Suzgun et al.'s Stack-RNN, and remark on the expressiveness of a more abstract variant. 

The aptly named Stack-RNN \cite{suz} integrates a differentiable stack into a recurrent neural network by the following update rules.
    \begin{align}
        &\tilde{\hv}_{t-1} = \hv_{t-1} + \wm_{sh} \svec^{(0)}_{t-1} \\
        &\hv_t = \tanh(\wm^x \xv_t + \um^h \tilde{\hv}_{t-1} + b^h) \\
        &\yv_t  = \sigma(\wm^y \hv_t + b^y) \\
        &\av_t = \textsf{softmax}\left(\wm^a \hv_t\right) \\
        & \nv_t = \sigma(\wm^n \hv_t) \\
        & \svec^{(0)}_t = \av^{(0)}_t \nv_t + \av^{(1)}_t \sv^{(1)}_{t-1} \\
        & \svec^{(i)}_t = \av^{(0)}_t  \sv^{(i-1)}_{t-1} + \av^{(1)}_t \sv^{(i+1)}_{t-1}
    \end{align}
The stack is maintained as $\svec_t = \svec^{(0)}_t \svec^{(1)}_t \cdots \svec^{(k)}_t$. The vector $\av_t = [\av_t^{(0)} \av_t^{(1)}]$ encodes the \texttt{PUSH} and \texttt{POP} operations. For example, if $\av_t^{(0)} = 1$ (\texttt{PUSH}) then all the stack elements $\sv_t^{(i)}$ are pushed down (Eq. 23), and the topmost element is changed to the new value $\nv_t$. Likewise, if $\av_t^{(1)} = 1$ (\texttt{POP}), then each stack element is shifted down. Moreover, the hidden state (Eq. 17, 18) equations have been adapted to take into consideration the topmost stack element.

Unsurprisingly, Neural Stacks have been quite successful at certain formal language tasks, and similar abstract models which implement stack interfaces, as studied in \cite{mer}, are known to inherit a great deal of expressive power from the addition of a stack. Suzgun et al. show that Neural Stacks can learn $\mathcal{D}_1$, $\mathcal{D}_2$, $\mathcal{D}_3$, and even $\mathcal{D}_6$ languages with 99\%-100\% accuracy (often closer to 100\%) on both training and test sets \cite{suz}. Merrill \cite{mer}, shows that for his abstract, definition of a neural stack the following holds.

\begin{theorem} Let $\textbf{S}_n \in \mathds{R}^{nk}$ be a neural stack with a feed-forward controller. Then,
\[
    \textarc{m}\left(\textbf{S}_n \right) \in  2^{\Theta(n)}.
\]
\end{theorem}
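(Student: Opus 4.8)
The plan is to establish the two matching bounds $\textarc{m}(\textbf{S}_n) \in 2^{O(n)}$ and $\textarc{m}(\textbf{S}_n) \in 2^{\Omega(n)}$ separately, since $2^{\Theta(n)}$ is exactly their conjunction. The unifying observation is that in the asymptotic limit $N \rightarrow \infty$ the feed-forward controller behaves as a \emph{discrete} device: each sigmoid and softmax output saturates, so the action vector $\av_t$ collapses to a definite \texttt{PUSH}/\texttt{POP} choice and the candidate value $\nv_t = \sigma(\wm^n \hv_t)$ collapses to a vector in a fixed finite set. Consequently the neural stack degenerates to an ordinary stack operating over a finite value alphabet $A$, and the problem reduces to counting the reachable contents of a classical stack after $n$ input-driven operations.

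For the upper bound, I would first argue that the per-cell value set is finite. Because the saturated candidate value lies in $\{0,1\}^k$ (or, more generally, a fixed finite set determined by the ranges of the saturated activations), and because each push writes such a value, the set $A$ of values that can ever occupy a cell has size $|A| = 2^{O(k)} = O(1)$ for fixed $k$. A subtlety is that $\nv_t$ may itself depend on the current top-of-stack through $\tilde{\hv}_{t-1}$; I would handle this by a short closure argument, noting that the map taking (input token, top value) to the new saturated value has finite range, so the reachable value set remains finite. After $n$ steps the stack has height at most $n$ and its contents form a string over $A$ of length at most $n$; there are at most $\sum_{h=0}^{n} |A|^h = 2^{O(n)}$ such strings, giving $|M(\textbf{S}_n^\theta)| \le 2^{O(n)}$ for every $\theta$, hence $\textarc{m}(\textbf{S}_n) \in 2^{O(n)}$.

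For the lower bound I would exhibit a single parameter setting $\theta$ realizing $2^{\Omega(n)}$ configurations. Take $|\Sigma| \ge 2$ and choose the weights so that the controller always fires \texttt{PUSH} (that is, $\av^{(0)}_t \rightarrow 1$) and so that the pushed value $\nv_t$ saturates to a distinct codeword for each input symbol, for instance the one-hot encoding of $\xv_t$. Under these weights, processing an input $x \in \Sigma^n$ drives the stack, in the limit, to a faithful record of the symbol sequence, so distinct inputs yield distinct limiting stack contents. Since there are $|\Sigma|^n \ge 2^n$ input strings of length $n$, we obtain $|M(\textbf{S}_n^\theta)| \ge 2^n$, and therefore $\textarc{m}(\textbf{S}_n) \ge 2^n \in 2^{\Omega(n)}$. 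Combining the two bounds yields $\textarc{m}(\textbf{S}_n) \in 2^{\Theta(n)}$.

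The step I expect to be the main obstacle is making the asymptotic saturation rigorous in both directions at once: I must verify that as $N \rightarrow \infty$ the controller's continuous outputs genuinely converge to the intended discrete values, that this convergence is stable enough across the $n$ recurrent steps for the limiting stack to be well defined, and, in the lower bound, that the always-\texttt{PUSH} behavior and the injective value encoding are jointly realizable by one feed-forward controller without the recursion through $\tilde{\hv}_{t-1}$ corrupting the stored values. The closure argument bounding $|A|$ in the upper bound is the other delicate point, since the dependence of $\nv_t$ on the stack top could \emph{a priori} manufacture fresh values at each step.
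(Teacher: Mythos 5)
A preliminary remark: the paper never proves this theorem itself --- it is quoted from Merrill \cite{mer} with no argument given --- so your proposal can only be measured against the argument that source uses, which is also the natural one.

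Your lower bound is essentially that standard argument, and it is sound. The realizability worry you raise is handled by decoupling the controller from the stack: set $\wm_{sh} = 0$ and $\um^h = 0$, so that $\hv_t$ depends only on $\xv_t$ (this is exactly what ``feed-forward controller'' licenses); give $\hv_t$ one coordinate that saturates to $+1$ independently of the input, let $\wm^a$ read that coordinate so that the \texttt{PUSH} logit strictly dominates, and let $\wm^n$ produce an injective saturated pattern for each symbol. Every limit then exists, distinct strings in $\Sigma^n$ leave distinct limiting stack contents, and $\textarc{m}(\textbf{S}_n) \ge 2^n$ follows.

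The genuine gap is in your upper bound. Your closure argument presumes that the controller's outputs saturate to a fixed finite set, but under the scaling $N\theta$ this is false in general: a softmax whose logits are tied converges not to a one-hot vector but to the uniform distribution over the tied coordinates, e.g.\ $\av_t \to (\tfrac{1}{2}, \tfrac{1}{2})$, and then $\svec^{(0)}_t = \av^{(0)}_t \nv_t + \av^{(1)}_t \sv^{(1)}_{t-1}$ is a genuine convex combination rather than a selection. Iterating such combinations over $t$ steps can manufacture, for instance, dyadic rationals with denominator $2^t$, so the set $A$ of reachable cell values need not have size $O(1)$, and the count $\sum_{h \le n} |A|^h$ is not justified. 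Fortunately, the upper bound needs none of this machinery: for fixed $\theta$ the limiting configuration is a deterministic function of the input string (each input contributes at most one element to $M(\textbf{S}_n^\theta)$), and there are only $|\Sigma|^n$ strings of length $n$, so $|M(\textbf{S}_n^\theta)| \le |\Sigma|^n = 2^{O(n)}$ for every $\theta$. This trivial count is the intended upper bound, and it makes clear that the entire content of the theorem lies in the lower bound: the neural stack attains the maximum general state complexity that any hidden state can possibly have.
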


\noindent Beyond stacks, one can also augment neural networks with some notion of a memory tape to make Neural Turing Machines. Originally proposed in \cite{graves}, Neural Turing Machines are notoriously complex to implement, and difficult to train. In fact, most follow up work to \cite{graves} is focused on improving the original NTM network design to make it more usable \cite{zaremba} \cite{kurach} \cite{yang} \cite{graves2016} \cite{gul}. 
The greater model flexibility also makes it harder to analyse asymptotically. However, it seems believable to us that in the IBFP setting, a result like 
    \[
        \mathcal{L}(\textsf{IBFP-NTM}) \subseteq \textbf{CONTEXT-SENSITIVE} 
    \]
could hold, and that NTMs would have exponential state complexity. Due to the complexity of a full NTM, we refrain from defining it formally here. However, Suzgun et al. \cite{suz} define what they call a Baby-NTM, which implements five operations on the ``tape'' represented by a real vector in a Neural Stack like fashion. For a tape $[a,b,c,d,e]$, these operations are,
    \begin{align*}
        \texttt{ROTATE-RIGHT} : [e, a, b, c, d] \\
        \texttt{ROTATE-LEFT} :  [b, c, d, e, a] \\
        \texttt{NO-OP} : [a, b, c, d, e] \\
        \texttt{POP-RIGHT} : [0, a, b, c, d] \\
        \texttt{POP-LEFT} : [b, c, d, e, 0].
    \end{align*}
We also omit the specific implementation details of the Baby-NTM, as it is quite similar to the neural stack. Also the core point we want to convey revolves around the fact that this architecture is more complicated, and more flexible than the Neural Stack. However, despite this \cite{suz} observed the Neural Stack had slightly better performance on learning $\mathcal{D}_1$, $\mathcal{D}_2$, $\mathcal{D}_3$, and $\mathcal{D}_6$ languages. This performance gap is unlikely representative of the model's asymptotic power, however it does highlight some of the emperical pitfalls of augmenting a model with more powerful constructs. 

Overall, MANNs certainly show a great deal of promise and potential to learn complex formal languages. However, due to their relative novelty they are far less understood, and less mature than other models we discussed in this paper. As such, there is certainly a great deal of room for future work leveraging these architectures.

\section{Summary} \label{sec-summary}
Understanding how well different neural network architectures decide membership for different classes of formal languages is a fundamental problem that is closely connected to our aspirations in the SafeDocs project. We surveyed a range of results, both theoretical and empirical quantifying the expressiveness of many state-of-the-art network architectures such as Convolutional Neural Networks, Recurrent Neural Networks, Transformer Networks, and Memory Augmented Neural Networks. We saw that networks such as CNNs, SRNNs, and GRUs are easier to implement and train, but formally may lack the capacity to learn complex languages. Then, we saw that the well-known Transformer Network is asymptotically weak, but posses high state complexity, which may explain its practical success. Finally, we explored augmenting networks with various notions of differentiable memory, and observed that this drastically increased their power at the cost of lower usability. Looking forward, we expect that our current work will discover similar results but concerning the generative capability of different networks, with respect to the Chomsky Hierarchy. 



\end{document}